\newif\ifdraft
  \newcommand{\sa}[1]{\textcolor{blue}{[#1 \textsc{}]}}
  \newcommand{\sa}[1]{}     
\definecolor{framegray}{gray}{0.1}
\newlength{\pcfigwidth}
\theoremstyle{definition}
\newtheorem{definition}{Definition}
\newtheorem{lemma}{Lemma}
\newtheorem{theorem}{Theorem}
\newtheorem{proposition}{Proposition}
\newtheorem{corollary}{Corollary}
\title{Tractable Sharpness-aware Regularization of Probabilistic Circuits}
\title{Tractable Sharpness-Aware Learning of Probabilistic Circuits}
\author{
    Hrithik Suresh\equalcontrib\textsuperscript{\rm 1},
    Sahil Sidheekh\equalcontrib\textsuperscript{\rm 2},
    Vishnu Shreeram M.P\textsuperscript{\rm 1},\\
    Sriraam Natarajan\textsuperscript{\rm 2},
    Narayanan C. Krishnan\textsuperscript{\rm 1}
}
\begin{document}

\maketitle

\begin{abstract}
Probabilistic Circuits (PCs) are a class of generative models that allow exact and tractable inference for a wide range of queries. While recent developments have enabled the learning of deep and expressive PCs, this increased capacity can often lead to overfitting, especially when data is limited. We analyze PC overfitting from a log-likelihood-landscape perspective and show that it is often caused by convergence to \emph{sharp optima} that generalize poorly. Inspired by sharpness aware minimization in neural networks, we propose a Hessian-based regularizer for training PCs. As a key contribution, we show that the trace of the Hessian of the log-likelihood--a sharpness proxy that is typically intractable in deep neural networks--can be computed efficiently for PCs. Minimizing this Hessian trace induces a gradient-norm-based regularizer that yields simple closed-form parameter updates for EM, and integrates seamlessly with gradient based learning methods. Experiments on synthetic and real-world datasets demonstrate that our method consistently guides PCs toward flatter minima, improves generalization performance. 
\end{abstract}

\section{Introduction}

\begin{figure}[t]
    \centering
    \begin{tikzpicture} 
\node[rectangle,draw=white!90,fill=white,opacity=1,minimum width=3.5cm,minimum height=0.5cm] at (0,0) {};
    \node[rectangle,draw=white!90,fill=gray!10,minimum width=4.1cm,minimum height=0.5cm] at (-3.25,0) {\small Standard Training};
    \hspace{-0.7em}\node[rectangle,draw=white!90,fill=gray!10,minimum width=4cm,minimum height=0.5cm] at (1.25,0) {\small With Hessian-Trace Regularizer};
    
    \end{tikzpicture}\\
    \vspace{-0.9em}
    \includegraphics[width=0.47\linewidth, trim=2em 2em 1em 2em,clip]{ 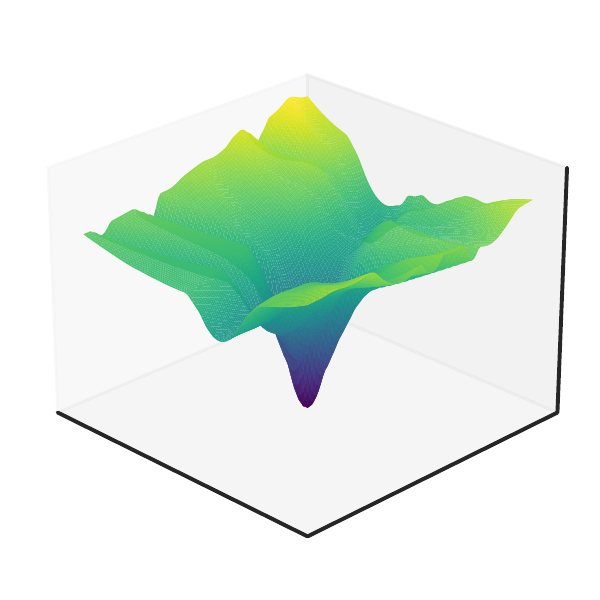}
    \hspace{0.2em}
    \includegraphics[width=0.47\linewidth, trim=2em 2em 1em 2em,clip]{ 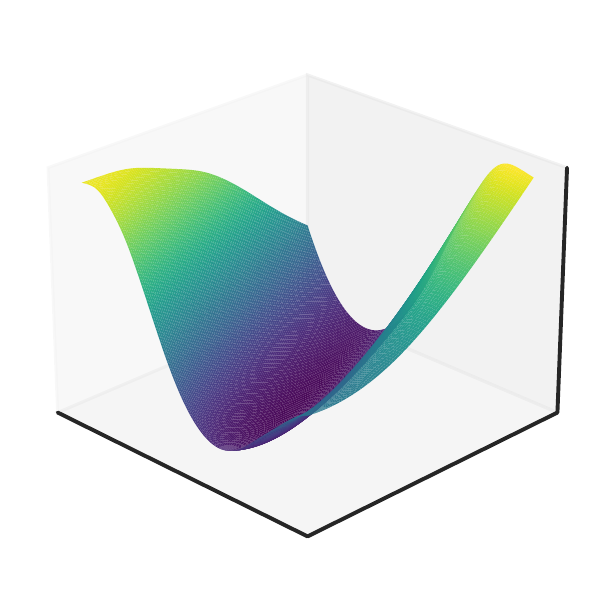}
    \\
    \vspace{-1em}
    \includegraphics[width=0.48\linewidth]{ 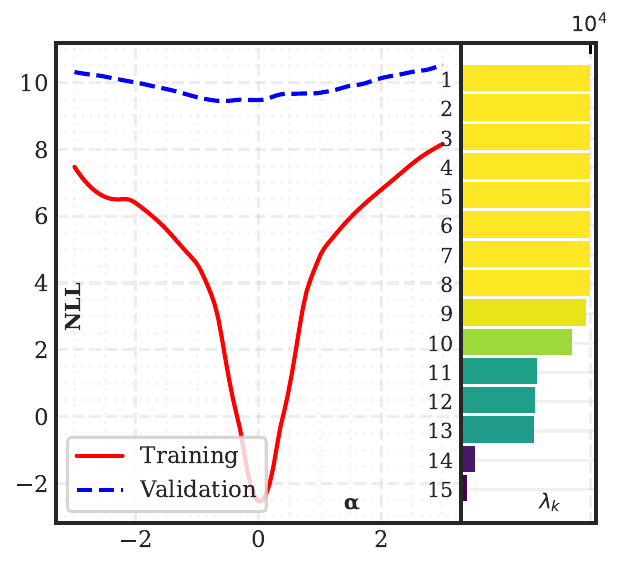}
    \includegraphics[width=0.48\linewidth]{ 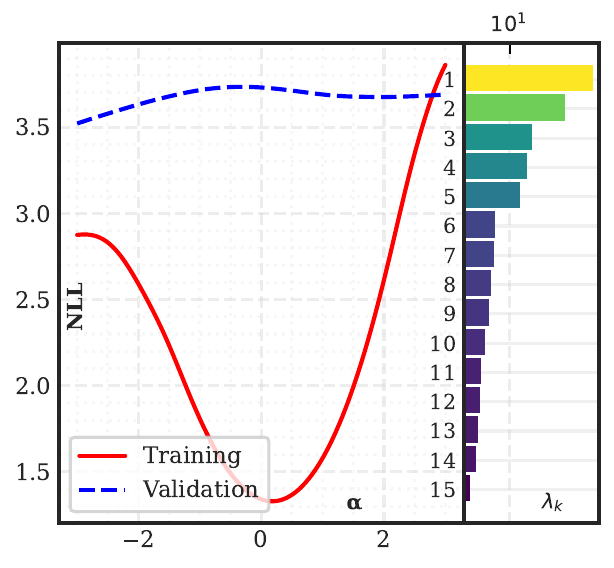}
    \caption{
    Visualization of the 2D (top) and 1D (bottom) loss-landscape (NLL) near the converged parameters of a PC trained with (right) and without (left) our Hessian trace regularizer on a $2$D dataset. Standard training falls into a narrow, sharp basin, while the regularized model settles in flatter minima that generalizes better. The bar plots in the bottom figures depict the top-$15$ eigenvalues of the hessian at the converged point. The lower eigen spectrum on the right quantifies the reduced sharpness achieved by our method.}
    \label{fig:pc-loss-landscape}
\end{figure}

Probabilistic generative models are fundamental to modern machine learning, offering a principled framework for reasoning under uncertainty by modeling data as samples from an unknown underlying distribution. While deep generative models—such as GANs \cite{GoodfellowNeurIPS2014}, VAEs \cite{KingmaICLR2014}, and Normalizing Flows \cite{PapamakariosJMLR2021}—have excelled in generating high fidelity samples, they sacrifice the ability to do exact inference tractably. This limits their usefulness when downstream tasks require calibrated probabilities.
In contrast, Probabilistic Circuits (PCs) \cite{ChoiIMS2020} have emerged as a unifying framework that imposes structural constraints to guarantee 
efficient and exact inference for a rich set of queries \cite{vergari2021compositional}, 
while retaining enough expressivity for real-world applications such as 
constrained generation \cite{zhang2023tractable,zhang2024adaptable}, image inpainting \cite{liu2024image}, lossless compression \cite{liu2022lossless}, multi-modal fusion \cite{sidheekh2025credibilityaware}, and  Neurosymbolic-AI \cite{ahmed2022semantic, ahmed2023semantic, loconte2023turn, karanam2025unified}.


Recent works have therefore pushed towards  building deeper and more expressive PCs \cite{sidheekh2024building}, with millions
of parameters that can be parallelized on GPUs for fast
and efficient training/inference \cite{ZhangICML25scaling,PeharzICML2020}.
However, similar to neural networks, deeper and more expressive PC architectures are increasingly prone to overfitting, especially when trained on limited/noisy data.
Standard parameter-learning methods can often converge to sharp local optima, leading to poor generalization. Such sharp minima, characterized by high curvature, have been extensively studied in deep neural networks, leading to the development of sharpness-aware optimization methods \cite{ForetICLR2021,KwonPMLR2021} that explicitly target flatter minima to enhance generalization.


However, to the best of our knowledge, sharpness-aware learning strategies remain relatively unexplored  for PCs.
We aim to bridge this gap through our work, by studying the geometry of the PC log-likelihood landscape. 
Our key insight is \emph{that the structural properties of a PC permit efficient and exact computation of second-order geometric information}. In particular, we show that the trace of the Hessian of the log-likelihood—which serves as a measure of surface curvature and a proxy for sharpness--
can be computed efficiently in time linear in 
the number of parameters and
the dataset size. This is in stark contrast to deep neural networks, where such exact Hessian computations are intractable in general.

Leveraging this insight,
we introduce a Hessian trace regularizer that  integrates easily into both gradient-based and EM-based training of PCs. Crucially, for EM, we derive the closed-form update rule for the sum node parameters, making our approach scalable and easy to integrate into existing training pipelines.
To provide an intuitive picture of what our approach accomplishes, we visualize the loss landscape around the converged parameters of a PC trained with and without the Hessian trace regularizer in Figure \ref{fig:pc-loss-landscape}, using the filter-normalized projection technique of \citet{li2018visualizing}.
The regularized model settles in a broader and flatter optima compared to standard training, verifying our claim that Hessian trace minimization steers the optimization away from sharp valleys, which in turn delivers stronger generalization.
Overall, in this work, we make the following contributions:
\begin{enumerate}[nosep]
    \item We derive a \textbf{closed form expression for the exact full Hessian} of the log-likelihood for tree-structured PCs and show that it can be computed tractably.
    \item For general (DAG-structured) PCs, we establish that although the full Hessian can be intractable, its trace remains exactly computable in time linear in both the number of parameters and dataset size, \textbf{providing the first practical curvature measure for large-scale PCs.}
    \item We introduce a {\bf novel sharpness-aware regularizer} for learning PCs, derived from  this Hessian trace.
    \item We show that while directly minimizing the Hessian trace via EM leads to a cubic update equation, we can reformulate this objective into an equivalent gradient norm minimization problem, {\bf resulting in a quadratic equation with closed-form parameter updates}.
    \item We conduct exhaustive experiments on multiple synthetic and real-world datasets to {\bf show that our regularizer enforces convergence to flatter optima and helps reduce overfitting}, especially in limited data settings.
\end{enumerate}

\section{Background and Preliminaries}

\begin{definition}[]
A \textbf{Probabilistic Circuit} $p$  is a parameterized directed acyclic graph (DAG) with a unique root node \( n_r \) that compactly encodes a joint probability distribution over a set of random variables \( \mathbf{X} = \{X_1, \ldots, X_d\} \). 
It is composed of three types of nodes: \textit{Input nodes (leaf)} representing simple univariate distributions over a single variable, \textit{Sum nodes (internal)} that compute a weighted sum of its children's output, and \textit{Product nodes (internal)} that compute a product of its children's output. Formally, each node \( n \) in the DAG computes a distribution \( p_n \), defined recursively as follows:
\[
p_n(x) = 
\begin{cases}
    f_n(x), & \text{if } n \text{ is an input node} \\
    \prod_{c\in in(n)} p_{c}(x), & \text{if } n \text{ is a product node} \\
    \sum_{c\in in(n)} \theta_{nc} \cdot p_c(x), & \text{if } n \text{ is a sum node }
\end{cases}
\]
where $f_n$ is a univariate input distribution (e.g., Bernoulli, Gaussian, etc), $in(n)$ denotes the children of $n$ and $\theta_{nc}$ is the weight parameter on the edge $(n, c)$, 
such that $\forall c \in in(n) \  \theta_{nc} \in (0, 1]$ and $\sum_{c \in in(n)} \theta_{nc} = 1$. 
\end{definition}

\begin{definition}[Scope]
The scope function $\phi$ associates to each node in the PC a subset of $\mathbf{X}$, i.e., $\phi(n) \subseteq \mathbf{X}$, over which it defines a distribution. For each non-terminal node $n$, $\phi(n)=\cup_{c\in in(n)}\phi(c)$. The scope of the root $n_r$ is $\mathbf{X}$    
\end{definition}
The sum and product nodes represent convex mixtures and factorized distributions over the scopes of their children, respectively. 
A PC is evaluated bottom up and the joint distribution is computed as the output of its root node, i.e. \( p(x) = p_{n_r}(x) \). The size of $p$, denoted $|p|$, is the number of edges in its DAG. We make the common assumption that $p$ contains alternating sum and product node layers. This formalism subsumes several classes of tractable models such as arithmetic circuits \cite{darwiche2003differential-ac}, sum-product networks \cite{PoonICCV2011}, PSDDs \cite{KisaKR2014} and cutset networks \cite{rahman2014cutset}.

To achieve tractability for exact marginal (MAR), conditional (CON) and maximum-a-posteriori (MAP) inference, a PC has to satisfy certain \textbf{structural properties} ~\cite{ChoiIMS2020}, some of which are:
\begin{definition}[Smoothness]
A PC is smooth if the children of every sum node $n$ have the same scope: $\forall c_1, c_2 \in in(n), \phi(c_1) = \phi(c_2)$.
\end{definition}
\begin{definition}[Decomposability]
A PC is decomposable if the children of every product node $n$ have disjoint scopes: $\forall c_1, c_2 \in in(n), \phi(c_1) \cap \phi(c_2) = \emptyset $. 
\end{definition}
\begin{definition}[Determinism]
Define the support $supp(n)$ of a PC node $n$ as the set of complete variable assignments $x\in val(\textbf{X})$ for which $p_n(x) > 0$. A PC is deterministic if the children of every sum node $n$ have disjoint support: $\forall c_1, c_2 \in in(n), c_1 \neq c_2, supp(c_1) \cap supp(c_2) = \emptyset$. 
\end{definition}
Smoothness ensures that each sum node represents a valid mixture, while decomposability allows integrals (or sums) to factorize recursively for tractable MAR and CON inference. Adding 
determinism further makes MAP inference tractable. However enforcing structural properties often reduces the model's expressivity. Thus, recent works have aimed to increase their expressivity by efficiently scaling them using tensorized implementations \cite{PeharzUAI2019,PeharzICML2020,LiuICML2024, loconte2025what}, borrowing inductive biases from deep generative models \cite{SidheekhPMLR2023,liu2023scaling, correia2023continuous, gala2024probabilistic} and relaxing structural assumptions \cite{loconte2024subtractive,loconte2025sum,wang2025relationship}.

Regardless of these advances, learning the PC parameters is predominantly achieved using one of two standard paradigms: stochastic gradient based optimization or expectation-maximization (EM).
As differentiable computational graphs, PCs allow efficient computation of gradients of the likelihood (or log-likelihood) w.r.t their parameters via backpropagation. 
Thus stochastic gradient descent (SGD) and its variants can be used directly to learn their parameters.
Alternatively, one can view each sum node as introducing a latent categorical variable
indexing its child edges and apply EM to optimize the incomplete data log-likelihood. In this framework, the E-step computes posterior distributions over edges, while the M-step updates the weights in closed form, given the posterior.
Notably, both SGD and EM admit a unified formulation that can be understood in terms of the notion of \emph{circuit flow} \cite{LiuNeurIPS2021}.
\begin{definition}[Circuit Flow]
The flow associated with the nodes of a PC is defined in the following recursive manner
\[
F_n(x)= 
\begin{cases}
     1,& \text{if n is the root node }\\
    \sum\limits_{c\in pa(n)}  F_c (x) ,& \text{if n is a input/sum node}\\
    \sum\limits_{c\in pa(n)} \theta_{n c}\frac{p_n(x)}{p_c(x)} F_c(x),& \text{if n is a product node}
\end{cases}
\]
\label{def:circuitflow}
\end{definition}
The flow associated with an edge $(n, c)$ is defined as \( F_{nc}(x) = \theta_{nc} \frac{p_c(x)}{p_n(x)} F_n(x)\). In the EM interpretation, $F_{n c}(x)$ corresponds to the expected count of how often the edge $(nc)$ is used (E-step). The M-step then maximizes $\sum_{c\in \mathrm{in}(n)}\;F_{nc}(x)\,\log\theta_{nc}
\quad\text{s.t.}\quad
\sum_{c\in \mathrm{in}(n)}\theta_{nc}=1$, which for a mini-batch $D_i$ yields the closed form update \(\theta^{\text{mini}}_{nc}
=\frac{\sum_{x\in D^i}F_{nc}(x)}
       {\sum_{j\in \mathrm{in}(n)}\sum_{x\in D^i}F_{nj}(x)}\).
       To smooth out mini-batch noise, a running average is often used:
\(
\theta^{\text{new}}_{nc} = (1 - \alpha)\, \theta^{\text{old}}_{nc} + \alpha\, \theta^{\text{mini}}_{nc}, \quad \alpha \in [0, 1].
\)
Under gradient-based learning, flows have the interpretation that $F_{n c}(x) = \theta_{n c} \frac{\partial \log P_{n_r}(x)}{\partial \theta_{n c}}$.
Computing the flows 
requires only a single forward-backward pass of the PC, and has been proven effective for learning PCs with hundreds of millions of parameters at scale \cite{LiuICML2024}.

However, such large and expressive PCs can overfit when data is limited, prompting \textbf{regularization strategies} that adapt ideas from both deep learning and graphical models. 
Probabilistic dropout \cite{PeharzUAI2019} randomly masks inputs and sum‑node children during training to simulate missing data and mixture uncertainty, reducing co-adaptation among sub-circuits. Pruning and re-growing \cite{DangNeurIPS2022} has been suggested to remove redundant sub‑networks, yielding sparser models that generalize better. Classical Laplacian smoothening on the sum-node weights have also been applied to PCs \cite{LiuNeurIPS2021}, although naive Laplace priors can bias the mixtures when child supports are imbalanced. \citet{shih2021hyperspns} observed that deep PCs can have tens of millions of parameters and proposed a hypernetwork that generates sum-node weights from low dimensional embeddings, thereby reducing the free parameters while preserving expressivity. Recent work has also tried to exploit the tractability of PCs to propose customized regularizers. \citet{ventola2023probabilistic} adapted the idea of Monte Carlo dropout \cite{gal2016dropout} to PCs by deriving tractable dropout inference that propagates variances through the circuit in a single pass to obtain better calibration and out-of-distribution detection. \citet{LiuNeurIPS2021} proposed \emph{data softening}, which replaces each training example with a locally blurred distribution and an entropy regularizer on the circuit's global output distribution.
However, this requires solving a non-linear equation via Newton’s method at each sum node, leading to added implementation and computational complexity during training.

In parallel, the deep-learning community has demonstrated that convergence to sharp minima--regions of high curvature--often correlates with poor generalization, leading to the development of \textbf{sharpness-aware optimization strategies}.
While the idea that flatter minima can help unlock higher generalization capability in deep neural networks (DNNs) has been around for a long time \cite{hochreiter1997flat}, recent works have shown how to achieve this in practice using sharpness aware minimization (SAM) \cite{ForetICLR2021}, which solves a local minimax problem to find parameter updates robust to worst-case perturbations.
Extensions such as Adaptive SAM (ASAM) \cite{KwonPMLR2021} have also been proposed to adaptively scale the perturbations using curvature information. 
We {\bf posit that the relevance of sharpness-aware techniques extends beyond DNNs to PCs}, where training objectives like log-likelihood maximization are often non-convex and prone to overfitting in overparameterized regimes. In such settings, sharpness-aware learning can not only offer a principled means to promote solutions that generalize better,
but the structure inherent in PCs can enable computing such curvature information \emph{exactly} and \emph{efficiently}.

\section{Sharpness-Aware Learning for PCs}
The Hessian matrix--the second order partial derivatives of a loss function--has long been used as a natural way to quantify flatness, as its eigenvalues capture the curvature along different directions. \citet{BottcherJSM2024} 
used dominant eigenvectors to visualize the loss landscape of DNNs and 
distinguish sharp minima from flat ones, while
\citet{ChaudhariICLR2017} used this idea to guide DNNs towards wider optima. More recently, \citet{KaurPMLR2023}  proposed using the largest eigenvalue as a flatness metric, while \citet{SankarAAAI2021} developed a layer-wise Hessian trace-based regularizer. However, as computing the full Hessian is intractable for DNNs, most methods rely on implicit curvature estimates, such as rank-$1$ approximations \cite{MartensICML2012} and the Hutchinson trace estimator \cite{HutchinsonCSSC1990}. In contrast, as we detail below, the structured DAG of a PC permits exact and efficient curvature computation, enabling a true-sharpness aware regularizer without resorting to costly approximations.

\subsection{Full Hessian 
Computation for Tree-Structured PCs}
A tree-structured PC (in short TS-PC) is one where every non-root node in its DAG has exactly one parent, and hence there is a unique path from the root ($n_r$) to any node ($n$). Our first result is that for a TS-PC, the Hessian of the log-likelihood with respect to the parameters can be computed tractably.
Figure \ref{fig:ts-pc-structure} illustrates a typical \(n_r\)--\(n\) path in a TS-PC. 
Let  $S^l_i(x)$ and $P^l_i(x)$ denote the outputs of the $i^{th}$ sum and product nodes at level $l$, respectively.
Note that $x$ would contain only the subset of variables defined in the scope of the node. We ignore it in the notation for clarity.
To see how this structure yields closed-form Hessian entries, consider the task of expressing the gradient of the root likelihood $P_{n_r}(x)$ w.r.t a mixing weight $\theta_{nc}$. We can unfold the computation along a unique path 
\(
n_r  \to  \cdots  \to P^l_i \to  S^{l}_j  \to  \cdots n\to c
\). At each product node $P^l_i$ on this path, the contribution of its other children, i.e. those not on the path enters as a multiplicative factor. We will refer to the product of such \emph{sibling} outputs as the \emph{product complement}, as defined below:

\begin{definition}[Product Complement]
The product complement of a product node $P^l_i$ with respect to one of its children $S^{l-1}_j$ is defined as:
\[
    \overline{P}^l_{ij}(x) = \prod_{\substack{k \in in(P^l_i)\\k \neq j}} S^{l-1}_k(x)
\]
\end{definition}
All such product complements in a PC can be computed in a single forward pass. Recall that a single forward-backward pass also computes the \emph{circuit flow} $F_n(x)$ (Def. \ref{def:circuitflow}) for each node $n$ and $F_{nc}(x)$ for each edge $(n,c)$, which can be used to compute the gradients. The unique-path property of a TS-PC  collapses the summations in the circuit flow recursion into a simpler chain, resulting in compact expressions for the flow and gradient, as presented below.

\begin{figure}
    \centering
    \includegraphics[width=\linewidth]{ 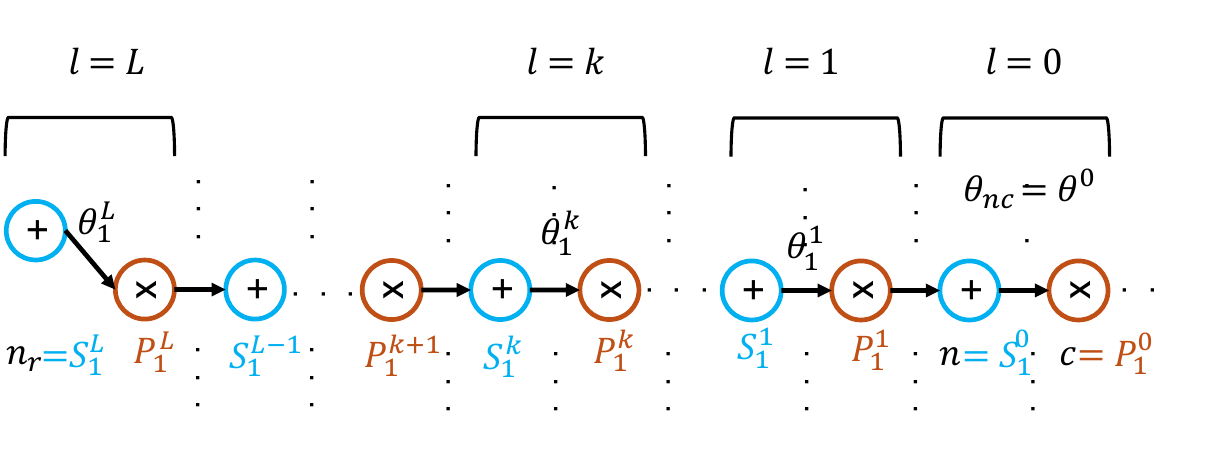}
    \caption{A typical path structure in a tree-structured PC.}
    \label{fig:ts-pc-structure}
\end{figure}
\begin{lemma}
Consider the unique path from the root $n_r$ to a sum node $n$ in a TS-PC: \(n_r \to P^L_1\to S^{L-1}_1\to \cdots\to P^1_1\to S^0_1 = n\),  as shown in Figure \ref{fig:ts-pc-structure}. Then the flow at node $n$ is given by:
\[
F_n(x) = \theta^1_1 \frac{P^1_1(x)}{P_{n_{r}}(x)}\prod_{j=2}^L\theta_1^j\bar{P}^j_{11}(x)
\]
\end{lemma}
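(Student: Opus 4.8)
The plan is to exploit the defining feature of a TS-PC---that every non-root node has a unique parent---so that each summation $\sum_{c\in pa(n)}$ in the circuit-flow recursion (Def.~\ref{def:circuitflow}) collapses to a single term. This reduces the flow computation to a deterministic walk down the path $n_r\to P^L_1\to S^{L-1}_1\to\cdots\to P^1_1\to S^0_1=n$, and I would establish the claimed closed form by induction along this chain, descending from the root toward $n$.

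First I would record the base case at the top product node $P^L_1$, whose only parent is the root. Since $F_{n_r}(x)=1$ and the incoming edge carries weight $\theta^L_1$, the product-node branch of the recursion gives $F_{P^L_1}(x)=\theta^L_1\,P^L_1(x)/P_{n_r}(x)$. I would then adopt the induction hypothesis that for each product node $P^m_1$ on the path,
\[
F_{P^m_1}(x)=\Big(\prod_{j=m}^{L}\theta^j_1\Big)\,\frac{P^m_1(x)\,\prod_{j=m+1}^{L}\bar{P}^j_{11}(x)}{P_{n_r}(x)},
\]
where the empty products at $m=L$ recover the base case.

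The inductive step is where the circuit structure does its work, and it proceeds in two moves. Descending from $P^m_1$ to its child sum node $S^{m-1}_1$, the sum/input branch of the recursion merely copies the flow, $F_{S^{m-1}_1}(x)=F_{P^m_1}(x)$. Next, descending from $S^{m-1}_1$ to the following product node $P^{m-1}_1$, the product branch introduces the factor $\theta^{m-1}_1\,P^{m-1}_1(x)/S^{m-1}_1(x)$. The crucial identity is the factorization $P^m_1(x)=S^{m-1}_1(x)\,\bar{P}^m_{11}(x)$, which holds because $P^m_1$ is a product node whose children are $S^{m-1}_1$ together with its siblings, and the product of those siblings is precisely the product complement $\bar{P}^m_{11}(x)$. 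Substituting this identity lets the factor $S^{m-1}_1(x)$ cancel between numerator and denominator, absorbing $\bar{P}^m_{11}(x)$ into the accumulated product and advancing the hypothesis from $m$ to $m-1$.

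I expect the only delicate point to be the bookkeeping of this telescoping cancellation---keeping track of which product complements have already been absorbed and ensuring the index ranges on the products stay aligned as successive sum-node outputs cancel---rather than any conceptual difficulty. Applying the hypothesis at $m=1$ and passing through the final sum node $S^0_1=n$ (again a bare copy of the flow) yields $F_n(x)=\big(\prod_{j=1}^{L}\theta^j_1\big)\,P^1_1(x)\prod_{j=2}^{L}\bar{P}^j_{11}(x)/P_{n_r}(x)$, which is exactly the stated expression once $\theta^1_1$ is pulled out in front of the product over $j=2,\dots,L$.
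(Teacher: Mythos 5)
Your proof is correct and follows essentially the same route as the paper: the paper's argument likewise unrolls the flow recursion along the unique root-to-node path, using the tree structure to collapse each sum over parents to a single term and the product-complement identity $P^m_1(x) = S^{m-1}_1(x)\,\bar{P}^m_{11}(x)$ to cancel the sum-node outputs. Your explicit induction hypothesis on the product nodes $P^m_1$, with the telescoping cancellation you describe, is precisely the bookkeeping the paper's unrolling performs (just organized top-down rather than bottom-up), and your index accounting checks out against the stated formula.
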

\begin{proof}
The proof involves unrolling the flow at a sum node using the tree structure of a TS-PC and the notion of product complements
of the product nodes along the path from the node to the root, and is provided in the appendix.
\end{proof}
\begin{corollary}
The flow associated with the edge $(n,c)$ parametrized by  $\theta^n_c$ ( = $\theta^0_1$ in Figure \ref{fig:ts-pc-structure}) is given by:
\[
    F_{nc}(x) = \theta_{nc} \dfrac{P_c(x)}{P_{n_r}(x)}\prod_{l=1}^L\theta^l_1 \bar{P}^{l}_{11}(x)   
\]
and correspondingly, the gradient of the likelihood at the root node with respect to $\theta_{nc}$ is
\[
    \dfrac{\partial P_{n_r}(x)}{\partial \theta_{nc}} =   P_c(x) \prod_{l=1}^L\theta^l_1 \bar{P}^{l}_{11}(x)
\]
\label{cor:derivative-in-terms-of-product-complement}
\end{corollary}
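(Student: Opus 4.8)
The plan is to derive both identities directly from the preceding Lemma, using the definition of edge flow and the gradient--flow correspondence recorded in the background section. I would begin from the defining relation for the edge flow, $F_{nc}(x) = \theta_{nc}\,\frac{p_c(x)}{p_n(x)}\,F_n(x)$, and substitute the closed-form expression for $F_n(x)$ given by the Lemma. Since $n = S^0_1$ and $c$ is one of its children, I have $p_n(x) = S^0_1(x)$ and $p_c(x) = P_c(x)$, so the edge flow becomes
\[
F_{nc}(x) = \theta_{nc}\,\frac{P_c(x)}{S^0_1(x)}\;\theta^1_1\,\frac{P^1_1(x)}{P_{n_r}(x)}\prod_{j=2}^{L}\theta^j_1\,\bar{P}^j_{11}(x).
\]

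The one substantive step is to absorb the immediate parent product node $P^1_1$ into the telescoping product so that the sum-node output $S^0_1(x)$ in the denominator cancels. Here I would invoke decomposability: because $P^1_1$ is a product node whose children include $S^0_1 = n$, its output factorizes as $P^1_1(x) = S^0_1(x)\,\bar{P}^1_{11}(x)$, which is exactly the definition of the product complement at level $1$. Substituting this cancels the $S^0_1(x)$ in the denominator and supplies the missing $l=1$ factor $\theta^1_1\,\bar{P}^1_{11}(x)$, merging the separated term $\theta^1_1 P^1_1(x)/P_{n_r}(x)$ and the partial product $\prod_{j=2}^L$ into the single product $\prod_{l=1}^L \theta^l_1 \bar{P}^l_{11}(x)$. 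This yields the claimed formula for $F_{nc}(x)$.

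For the gradient, I would use the gradient--flow correspondence $F_{nc}(x) = \theta_{nc}\,\frac{\partial \log P_{n_r}(x)}{\partial \theta_{nc}}$ together with the chain rule $\frac{\partial \log P_{n_r}(x)}{\partial \theta_{nc}} = \frac{1}{P_{n_r}(x)}\,\frac{\partial P_{n_r}(x)}{\partial \theta_{nc}}$. Solving for the derivative gives $\frac{\partial P_{n_r}(x)}{\partial \theta_{nc}} = \frac{P_{n_r}(x)}{\theta_{nc}}\,F_{nc}(x)$, and substituting the edge-flow formula just obtained cancels both $\theta_{nc}$ and the factor $P_{n_r}(x)^{-1}$, leaving $P_c(x)\prod_{l=1}^L \theta^l_1 \bar{P}^l_{11}(x)$, as required.

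There is no genuine obstacle here; the statement is a corollary-level manipulation of the Lemma. The only point requiring care is the decomposability-based factorization $P^1_1(x) = S^0_1(x)\,\bar{P}^1_{11}(x)$, which reindexes the product to start at $l=1$ and makes the $S^0_1(x)$ denominator disappear -- everything else is cancellation. I would also state explicitly that $c$ being a child of the sum node $n$ is what makes $p_c(x) = P_c(x)$, and that the alternating sum/product layering guarantees $P^1_1$ is a product node, so that decomposability indeed applies.
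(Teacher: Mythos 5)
Your proposal is correct and is exactly the paper's route: the paper's proof is simply "a straightforward application of the flow defined on an edge," i.e., substituting the Lemma's expression for $F_n(x)$ into $F_{nc}(x)=\theta_{nc}\,\frac{p_c(x)}{p_n(x)}F_n(x)$ and then using the flow--gradient correspondence, which is what you spell out. One minor correction: the factorization $P^1_1(x)=S^0_1(x)\,\bar{P}^1_{11}(x)$ follows from the definition of a product node's computation together with the definition of the product complement, not from decomposability (which concerns disjoint scopes of children, not the product form of the output).
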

\begin{proof}
The proof is a straight forward application of the flow defined on an edge.
\end{proof}
For notational simplicity we have labeled every sum and product‐node along our canonical path by index “$1$”.  In a general TS‑PC path, the nodes at layer \(l\) would carry their own index \(i_l\), but the same formulas hold by replacing each “$1$” with the appropriate \(i_l\).
The key insight from the above corollary is that the partial derivative of the likelihood with respect to
$\theta_{nc}$ factorizes into exactly the product complements and weights along the unique path from $n$ to the root, and \emph{does not depend on any sum-node outputs} on that path.

Let $\theta_{n c}$ and $\theta_{n' c'}$ be the weights associated with two distinct edges in a TS-PC. Due to the tree-structure, $n$ and $n'$ either lie on the same path from the root node or share exactly one deepest common ancestor, let us call it $q$. Thus, the pair $(\theta_{n c}, \theta_{n' c'})$ belongs to one of the following three cases: 1) \textbf{Sum pair}: if $q$ is a sum node. 2) \textbf{Product pair}: if $q$ is a product node. 3) \textbf{Path pair}: if the 
$\theta_{n c}$ and $\theta_{n' c'}$ lie on the same path from the root.
Our main result shows that the second derivative of root likelihood w.r.t the PC parameters can be expressed in closed form for each of the above cases.
\begin{theorem}
    \label{lemma:Likelihood-Hessian}
    The mixed second derivative  \( \frac{\partial^2 P_{nr}(x)}{\partial \theta_{n'c'} \partial \theta_{nc}}\)  of the output of a tree-structured PC with respect to any two parameters \(\theta_{nc}\) and \(\theta_{n'c'}\)  equals
\begin{align*}
    \begin{cases}
     0, &\text{if } 
     \text{ it is a sum pair}\\
   \dfrac{\dfrac{\partial P_{n_r}(x)}{\partial \theta_{nc}}\dfrac{\partial P_{n_r}(x)}{\partial \theta_{n'c'}}}{\theta^k_1 P^k_{1}(x)\left(\prod_{l=k+1}^L\theta^l_1 \bar{P}^{l}_{11}(x)\right)} ,& \text{if } 
   \text{it is a product pair}\\ 
    \dfrac{1}{\theta_{n'c'}} \cdot \dfrac{\partial P_{n_r}(x)}{\partial \theta_{nc}},& \text{if }
    \text{ it is a path pair with}\\ & \text{ $\theta_{n'c'}$ closer to the root}
\end{cases}
\end{align*}
\end{theorem}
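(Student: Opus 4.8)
The plan is to build directly on the closed-form gradient of Corollary~\ref{cor:derivative-in-terms-of-product-complement}, which writes $\partial P_{n_r}(x)/\partial \theta_{nc}$ as a single product running along the unique root-to-$n$ path: the child output $P_c(x)$, the edge weights $\theta^l_1$, and the product complements $\bar{P}^l_{11}(x)$, with no dependence on any sum-node output on that path. The entire theorem then reduces to one question: \emph{on which factors of this product does the second parameter $\theta_{n'c'}$ act?} So the first step is to differentiate this gradient once more with respect to $\theta_{n'c'}$ and, using the tree structure (unique parent, hence disjoint subtrees below any node), to locate $\theta_{n'c'}$ relative to the root-to-$n$ path in each of the three cases.

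For the \textbf{sum pair} I would argue that no factor depends on $\theta_{n'c'}$. Since the deepest common ancestor $q$ is a sum node, the $(n',c')$ edge lies in a subtree branching off a sibling of the $(n,c)$-path at a \emph{sum} node. But every factor of the gradient is either $P_c(x)$ (contained in the $(n,c)$-side subtree below $q$), an on-path weight, or a complement $\bar{P}^l_{11}(x)$, and each complement collects only sibling subtrees branching at \emph{product} nodes of the path. Because $(n',c')$ branches at a sum node, it appears in none of these, so the gradient is constant in $\theta_{n'c'}$ and the mixed derivative vanishes. For the \textbf{path pair}, $\theta_{n'c'}$ is itself one of the on-path weights $\theta^m_1$ (with $n'$ closer to the root), and by the same disjointness reasoning it appears neither in $P_c(x)$ nor in any complement; it enters only through the single linear factor $\theta^m_1$ in $\prod_{l=1}^L \theta^l_1 \bar{P}^l_{11}(x)$. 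Differentiating therefore just divides that product by $\theta^m_1 = \theta_{n'c'}$, giving $\tfrac{1}{\theta_{n'c'}}\,\partial P_{n_r}(x)/\partial \theta_{nc}$.

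The \textbf{product pair} is where the real work lies and where I expect the main obstacle. Here $q = P^k_1$, and the only dependence on $\theta_{n'c'}$ sits inside the complement $\bar{P}^k_{11}(x)$, which contains the sibling sum node ${S'}^{k-1}$ on the $(n',c')$ path as one of its factors. I would write $\bar{P}^k_{11}(x) = {S'}^{k-1}(x)\cdot R(x)$, where $R(x)$ is the product of the remaining children of $q$, differentiate, and recognize $\partial {S'}^{k-1}(x)/\partial \theta_{n'c'}$ as the gradient of the \emph{subtree} rooted at ${S'}^{k-1}$, applying Corollary~\ref{cor:derivative-in-terms-of-product-complement} recursively to that subtree. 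The delicate bookkeeping is to reassemble this into the claimed product-of-gradients form: both gradients share the upper factor $\theta^k_1\prod_{l=k+1}^L\theta^l_1\bar{P}^l_{11}(x)$ and the common term $R(x)$, and one uses the product-node identity $P^k_1(x) = S^{k-1}_1(x)\,{S'}^{k-1}(x)\,R(x)$ to cancel the over-counted factors.

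Concretely, the crux of the algebra is verifying that after multiplying the two gradients and dividing by the denominator $\theta^k_1 P^k_1(x)\prod_{l=k+1}^L\theta^l_1\bar{P}^l_{11}(x)$, exactly one copy of $R(x)$ survives, i.e. $S^{k-1}_1 {S'}^{k-1} R^2 / P^k_1 = R$. Tracking which weights and complements are genuinely shared between the two paths versus which differ only at level $k$ is the part most prone to error, and I would control it by fixing the divergence level $k$ explicitly and splitting every product at $l = k$ (levels $>k$ shared, the complement at $l=k$ differing by which sibling it excludes, and levels $<k$ lying on disjoint sides). Once that split is in place, the three cases follow by a direct, if careful, computation.
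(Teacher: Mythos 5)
Your proposal is correct and follows essentially the same route the paper sets up: differentiate the gradient expression of Corollary~\ref{cor:derivative-in-terms-of-product-complement} once more, locate $\theta_{n'c'}$ relative to the root-to-$n$ path via the tree structure, and resolve the three cases (vanishing for sum pairs, a single linear factor for path pairs, and the complement factorization $\bar{P}^k_{11} = {S'}^{k-1}R$ with the cancellation $S^{k-1}_1\,{S'}^{k-1}R^2/P^k_1 = R$ for product pairs). The key algebraic identity you flag for the product-pair case is exactly the one that makes the claimed ratio form work, and your verification plan for it is sound.
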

\noindent where $P_{1}^k$ denotes the deepest common product node, and we follow our canonical notation for the path to the root.
\begin{proof}
    Deferred to the appendix.
\end{proof}
Theorem \ref{lemma:Likelihood-Hessian} can be further extended to obtain equally simple expressions for the Hessian of the \emph{log-likelihood} which is typically used as the objective for training PCs, as follows:
\begin{proposition}
If ($\theta_{n c}$, $\theta_{n'c'}$) is a sum pair, then
\begin{equation}
\dfrac{\partial^2 \log P_{n_r}(x)}{\partial \theta_{n'c'} \partial \theta_{nc}} = -\dfrac{F_{nc}(x)}{\theta_{nc}} \dfrac{F_{n'c'}(x)}{\theta_{n'c'}}
\end{equation}
\end{proposition}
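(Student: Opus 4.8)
The plan is to reduce the log-likelihood Hessian to the likelihood Hessian already established in Theorem~\ref{lemma:Likelihood-Hessian}, and then eliminate the surviving first-order terms using the flow--gradient identity $F_{nc}(x)=\theta_{nc}\,\partial \log P_{n_r}(x)/\partial \theta_{nc}$ recorded earlier. Concretely, I would start from the logarithmic chain rule: differentiating $\partial \log P_{n_r}/\partial \theta_{nc} = P_{n_r}^{-1}\,\partial P_{n_r}/\partial \theta_{nc}$ once more with respect to $\theta_{n'c'}$ and applying the product/quotient rule gives
\[
\frac{\partial^2 \log P_{n_r}(x)}{\partial \theta_{n'c'}\,\partial \theta_{nc}}
= \frac{1}{P_{n_r}(x)}\,\frac{\partial^2 P_{n_r}(x)}{\partial \theta_{n'c'}\,\partial \theta_{nc}}
- \frac{1}{P_{n_r}(x)^2}\,\frac{\partial P_{n_r}(x)}{\partial \theta_{nc}}\,\frac{\partial P_{n_r}(x)}{\partial \theta_{n'c'}}.
\]
This identity holds for any pair of edge weights, so the only circuit-specific input needed is the behaviour of the two terms for a sum pair.

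Next I would invoke the sum-pair case of Theorem~\ref{lemma:Likelihood-Hessian}, which asserts that the mixed second derivative of the \emph{likelihood} itself vanishes, $\partial^2 P_{n_r}/\partial \theta_{n'c'}\,\partial \theta_{nc}=0$. This annihilates the first term above and leaves only the rank-one outer-product of the two likelihood gradients. Then I would convert those likelihood gradients into flows: from $F_{nc}(x)=\theta_{nc}\,\partial \log P_{n_r}(x)/\partial \theta_{nc}$ we obtain $\partial \log P_{n_r}/\partial \theta_{nc}=F_{nc}(x)/\theta_{nc}$, and multiplying by $P_{n_r}(x)$ gives $\partial P_{n_r}/\partial \theta_{nc}=P_{n_r}(x)\,F_{nc}(x)/\theta_{nc}$, with the analogous expression for the primed edge. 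Substituting both into the surviving term, the two factors of $P_{n_r}(x)$ in the numerator cancel the $P_{n_r}(x)^2$ in the denominator, leaving exactly $-\frac{F_{nc}(x)}{\theta_{nc}}\,\frac{F_{n'c'}(x)}{\theta_{n'c'}}$, as claimed.

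There is no genuine obstacle here, since the proposition is essentially a corollary of Theorem~\ref{lemma:Likelihood-Hessian} plus one application of the log-derivative chain rule. The only points demanding care are the bookkeeping of the powers of $P_{n_r}(x)$ when passing between likelihood and log-likelihood derivatives, and ensuring the flow identity is applied to the gradient of the \emph{root} likelihood rather than to any intermediate node output; if one is consistent on these two points the cancellation is immediate.
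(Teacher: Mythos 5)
Your proof is correct and matches the paper's approach: the paper likewise obtains the log-likelihood propositions by extending Theorem~\ref{lemma:Likelihood-Hessian} through the log-derivative chain rule, using the vanishing sum-pair likelihood Hessian and the flow identity $\partial \log P_{n_r}(x)/\partial \theta_{nc} = F_{nc}(x)/\theta_{nc}$ to produce the rank-one outer-product term. The bookkeeping of the $P_{n_r}(x)$ factors in your substitution is exactly right, so nothing is missing.
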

\begin{proof}
    Deferred to the appendix.
\end{proof}
Thus, the mixed second derivative of the log-likelihood for a sum pair
factorizes into the product of the corresponding first-order derivatives. 
\textit{A special case is when the sum pair corresponds to the same parameter ($\theta_{nc} = \theta_{n'c'}$), in which case the double derivative equals the square of the gradient of the log-likelihood with respect to $\theta_{nc}$.}

\begin{proposition}
    If ($\theta_{n c}$, $\theta_{n'c'}$) is a product pair with $P^k_1$ being their deepest common ancestor, then
    \small
    \begin{align*}
     \dfrac{\partial^2 log P_{n_r} (x)}{\partial \theta_{n'c'}\partial \theta_{nc}} = &\dfrac{P_{n_r}(x)\dfrac{F_{nc}(x)}{\theta_{nc}} \dfrac{F_{n'c'}(x)}{\theta_{n'c'}}}{\theta^k_1 P^k_{1}(x)\left(\prod_{l=k+1}^L\theta^l_1 \bar{P}^{l}_{11}(x)\right)} \\ &- \dfrac{F_{nc}(x)}{\theta_{nc}} \dfrac{F_{n'c'}(x)}{\theta_{n'c'}}
    \end{align*}
\end{proposition}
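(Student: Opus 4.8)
The plan is to reduce the statement to the already-established expression for the mixed second derivative of the \emph{raw} likelihood (the product-pair case of Theorem~\ref{lemma:Likelihood-Hessian}) combined with the elementary logarithmic identity
\[
\frac{\partial^2 \log P_{n_r}(x)}{\partial \theta_{n'c'}\,\partial \theta_{nc}}
= \frac{1}{P_{n_r}(x)}\frac{\partial^2 P_{n_r}(x)}{\partial \theta_{n'c'}\,\partial \theta_{nc}}
- \frac{1}{P_{n_r}(x)^2}\frac{\partial P_{n_r}(x)}{\partial \theta_{nc}}\frac{\partial P_{n_r}(x)}{\partial \theta_{n'c'}}.
\]
First I would record the single algebraic bridge that converts first-order likelihood derivatives into flows: from the gradient interpretation of the edge flow stated in the preliminaries, $F_{nc}(x)=\theta_{nc}\,\partial \log P_{n_r}(x)/\partial\theta_{nc}$, which rearranges to $\tfrac{1}{P_{n_r}(x)}\,\partial P_{n_r}(x)/\partial\theta_{nc}=F_{nc}(x)/\theta_{nc}$, and likewise for $\theta_{n'c'}$. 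This is the relation that lets every $P_{n_r}$ denominator and every bare gradient be traded for a flow-over-weight ratio.

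The subtracted term is then immediate: substituting the bridge relation into both gradients turns $-\tfrac{1}{P_{n_r}(x)^2}\,\tfrac{\partial P_{n_r}(x)}{\partial\theta_{nc}}\tfrac{\partial P_{n_r}(x)}{\partial\theta_{n'c'}}$ into $-\tfrac{F_{nc}(x)}{\theta_{nc}}\tfrac{F_{n'c'}(x)}{\theta_{n'c'}}$, matching the second summand of the claim (and, incidentally, recovering the sum-pair proposition, which is exactly this term with the leading second-derivative contribution vanishing).

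For the first term I would insert the product-pair formula from Theorem~\ref{lemma:Likelihood-Hessian}, whose numerator already carries the product $\tfrac{\partial P_{n_r}(x)}{\partial\theta_{nc}}\tfrac{\partial P_{n_r}(x)}{\partial\theta_{n'c'}}$ over the deepest-common-ancestor factor $\theta^k_1 P^k_1(x)\bigl(\prod_{l=k+1}^L\theta^l_1\bar{P}^l_{11}(x)\bigr)$. Applying the bridge relation to each gradient introduces two hidden powers of $P_{n_r}(x)$, which together with the outer $1/P_{n_r}(x)$ from the logarithmic identity leave exactly one surviving $P_{n_r}(x)$ in the numerator. This produces $P_{n_r}(x)\tfrac{F_{nc}(x)}{\theta_{nc}}\tfrac{F_{n'c'}(x)}{\theta_{n'c'}}$ over the same denominator, which is the first summand of the claim; collecting the two terms finishes the argument.

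The only real obstacle is bookkeeping of the powers of $P_{n_r}(x)$ and of the denominator factors: one must confirm that of the two $1/P_{n_r}(x)$ factors hidden inside the gradients, exactly one cancels against the $P_{n_r}$-weight carried by the numerator products, reducing $P_{n_r}(x)^2$ to a single power after the outer $1/P_{n_r}(x)$ is applied, since this count is easy to get off by one. A convenient sanity check is the diagonal self-pair specialization and verifying that the leftover denominator is precisely the deepest common product node $P^k_1(x)$ together with its product-complement/weight factors, exactly as identified in Theorem~\ref{lemma:Likelihood-Hessian}.
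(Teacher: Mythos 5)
Your proof is correct and follows what is essentially the paper's own route: the propositions on the log-likelihood Hessian are presented as extensions of Theorem~\ref{lemma:Likelihood-Hessian}, obtained exactly as you do by combining the product-pair case of that theorem with the identity $\partial^2 \log P = \tfrac{1}{P}\partial^2 P - \tfrac{1}{P^2}\,\partial P\,\partial P$ and the flow--gradient relation $\tfrac{\partial P_{n_r}(x)}{\partial \theta_{nc}} = P_{n_r}(x)\,F_{nc}(x)/\theta_{nc}$. Your power counting of $P_{n_r}(x)$ is right (two powers from the gradients, one cancelled by the outer $1/P_{n_r}(x)$), and the only quibble is the suggested ``diagonal self-pair'' sanity check, which does not apply here since a self-pair is classified as a sum pair, not a product pair.
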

\begin{proof}
    Deferred to the appendix.
\end{proof}
\begin{proposition}
    If ($\theta_{n c}$, $\theta_{n'c'}$) is a path pair, with $\theta_{n'c'}$ closer to the root node, then
    \begin{align*}
\frac{\partial^2 \log P_{n_r}(x)}{\partial \theta_{n'c'} \partial \theta_{nc}} &= \frac{\partial^2 \log P_{n_r}(x)}{\partial \theta_{nc} \partial \theta_{n'c'}}\\ 
&= \frac{1}{\theta_{n'c'}} \cdot \frac{F_{nc}(x)}{\theta_{nc}}- \frac{F_{nc}(x)}{\theta_{nc}}\frac{F_{n'c'}(x)}{\theta_{n'c'}}
    \end{align*}
\end{proposition}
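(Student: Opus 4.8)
The plan is to reduce this path-pair case for the \emph{log}-likelihood to the path-pair case for the \emph{likelihood} Hessian already established in Theorem~\ref{lemma:Likelihood-Hessian}, mirroring how (I expect) the preceding sum-pair and product-pair propositions were obtained. Writing $P := P_{n_r}(x)$ throughout, the engine is the standard second-order ``log-derivative'' identity
\[
\frac{\partial^2 \log P}{\partial \theta_{n'c'}\,\partial \theta_{nc}}
= \frac{1}{P}\,\frac{\partial^2 P}{\partial \theta_{n'c'}\,\partial \theta_{nc}}
\;-\; \frac{1}{P^2}\,\frac{\partial P}{\partial \theta_{nc}}\,\frac{\partial P}{\partial \theta_{n'c'}},
\]
which follows by differentiating $\partial_{\theta}\log P = P^{-1}\partial_{\theta}P$ once more. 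It is well defined because a PC evaluates to $P_{n_r}(x)>0$ on its support and $P$ is a smooth (indeed multilinear) function of the weights.

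Next I would substitute the path-pair entry of Theorem~\ref{lemma:Likelihood-Hessian}, namely $\partial^2 P/(\partial\theta_{n'c'}\partial\theta_{nc}) = \theta_{n'c'}^{-1}\,\partial P/\partial\theta_{nc}$, into the first term. It collapses to $\theta_{n'c'}^{-1}\cdot P^{-1}\partial_{\theta_{nc}}P = \theta_{n'c'}^{-1}\,\partial_{\theta_{nc}}\log P$, a single-flow term, while the second term is just the product of the two first-order log-derivatives. Converting each first-order log-derivative to a flow through the stated identity $\partial_{\theta_{nc}}\log P = F_{nc}(x)/\theta_{nc}$ (and likewise for $\theta_{n'c'}$) then produces exactly $\tfrac{1}{\theta_{n'c'}}\tfrac{F_{nc}(x)}{\theta_{nc}} - \tfrac{F_{nc}(x)}{\theta_{nc}}\tfrac{F_{n'c'}(x)}{\theta_{n'c'}}$, as claimed.

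For the asserted symmetry in the order of differentiation, I would simply invoke Clairaut/Schwarz: since $\log P$ is smooth in the weights, the mixed partials commute and both orders equal the expression derived above, so no separate calculation is required. The main point requiring care—and the closest thing to an obstacle—is that the surviving single-flow term is \emph{asymmetric}, carrying $F_{nc}$ (the parameter farther from the root) over $\theta_{n'c'}$ (the parameter nearer the root); one must track the ``closer to the root'' convention and resist prematurely symmetrizing it. This is consistent with Schwarz's theorem because Theorem~\ref{lemma:Likelihood-Hessian}'s likelihood-Hessian entry is itself already the symmetric quantity written in that convention, while the subtracted product-of-gradients term is manifestly symmetric; hence the total yields the same value in either differentiation order.
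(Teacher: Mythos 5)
Your proposal is correct and takes essentially the same approach as the paper's proof: expand the mixed second derivative of the log-likelihood via the identity $\frac{\partial^2 \log P}{\partial \theta_{n'c'}\partial \theta_{nc}} = \frac{1}{P}\frac{\partial^2 P}{\partial \theta_{n'c'}\partial \theta_{nc}} - \frac{1}{P^2}\frac{\partial P}{\partial \theta_{nc}}\frac{\partial P}{\partial \theta_{n'c'}}$, substitute the path-pair entry of Theorem~\ref{lemma:Likelihood-Hessian}, and convert the first-order terms to flows using $\frac{\partial \log P_{n_r}(x)}{\partial \theta_{nc}} = F_{nc}(x)/\theta_{nc}$, which is exactly the pattern underlying the sum-pair and product-pair propositions as well. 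Your handling of the symmetry claim via Schwarz's theorem, together with the remark that the apparently asymmetric single-flow term is consistent because the likelihood-Hessian entry is itself the symmetric quantity written under the ``closer to the root'' convention, is also sound.
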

\begin{proof}
    Deferred to the appendix.
\end{proof}

Since each entry of the full Hessian depends only on the circuit flow, the mixing parameters and the product complements--all of which can be computed efficiently--{\bf the Hessian as a whole can likewise be evaluated tractably}.

\subsection{Hessian for General (DAG-Structured) PCs}
To understand whether the tractability of Hessian computation extends to  general DAG-structured PCs, we examine its diagonal and off-diagonal entries separately. Our analysis suggests that while the former can be computed efficiently, the latter can suffer from a combinatorial explosion. We present the results and defer proofs to the supplementary.

\begin{proposition}
The diagonal entry of the Hessian of the log-likelihood of a general PC w.r.t a parameter $\theta_{nc}$ is given by:
\[
\dfrac{\partial^2 \log P_{n_r}(x)}{\partial^2 \theta_{nc}} = -\left(\dfrac{F_{nc}(x)}{\theta_{nc}}\right)  ^2 
\]
\label{Prop:Hessian_trace}
\end{proposition}
\begin{proof}
   Deferred to the appendix.
\end{proof}
From Proposition \ref{Prop:Hessian_trace}, we observe that computing the trace only requires access to the edge flows and the mixing parameters. From Definition \ref{def:circuitflow}, all edge flows can be computed with a single forward and backward pass through the circuit, making the flow computation linear in time with respect to the number of edges. Consequently, the overall trace computation is also linear in time.
However, for general PCs, we conjecture that computing the off-diagonal entries of the Hessian is intractable due to the exponential number of dependency paths between parameters. Concretely, consider a PC where each internal node (except the root and its immediate children) has up to $w$ parents. Suppose that sum nodes $n$ and $n'$ share $w$ deepest common ancestors at the same depth, and let $d^*$ denote the number of layers between these deepest common ancestors and the lower of the two nodes, i.e.,
$d^* = \min(\text{depth}(n), \text{depth}(n')) - \text{depth}(\text{deepest common ancestors}).
$ Then, the number of paths connecting $n$ and $n'$ can grow as $O(w^{d^*})$. This exponential growth in the number of shared paths indicates that, in such densely connected PCs, computing off-diagonal entries of the Hessian becomes computationally intractable. 

\subsection{A Tractable Sharpness Regularizer for PCs}
Although the full Hessian can be intractable for arbitrary PCs, its trace remains efficiently computable, and serves as a scalar measure of the overall curvature of the log-likelihood surface--large values indicating sharper optima, while lower values correspond to flatter optima \cite{KeskarICLR2017,ForetICLR2021,KwonPMLR2021}. Thus,
reducing the Hessian trace during training can serve as an effective regularization strategy.
While the full Hessian can be computed efficiently and potentially incorporated as a regularizer for tree-structured PCs, we focus instead on its trace, as it is both simpler to compute and applicable to the general class of PCs.
Crucially, for any PC (not just tree-structured),
the absolute trace (ignoring \textit{absolute} henceforth) 
simplifies to the sum of squared partial derivatives:
\begin{align*}
\mathrm{Tr}\left( \nabla^2 \log P_{n_r}(x) \right) &= \sum_{n,c} \left( \dfrac{\partial \log P_{n_r}(x)}{\partial \theta_{nc}} \right)^2 \\&= \sum_{n,c} \left( \dfrac{F_{nc}(x)}{\theta_{nc}} \right)^2          
\end{align*}
This enables sharpness-aware regularization using only first-order derivatives that can be computed using the edge flows, while still promoting flatter solutions during training.
A simple way to incorporate this into gradient-based learning is to add the Hessian trace as a regularizer 
\(R(\theta, x)=\sum_{n,c}(F_{nc}(x)/\theta_{nc})^2\) 
to the negative log-likelihood, yielding the  objective \(\min_\theta \;-\sum_{x\in D}\log P_{n_r}(x)
\;+\;\lambda\sum_{x\in D}R(\theta,x).\) Since $R(\theta,x)$ depends only on the local flows and weights, its gradients can be computed with a forward-backward pass, making integration with optimizers like SGD or Adam straightforward. However, EM is often preferred over gradient descent to learn PCs as it achieves faster convergence \cite{desana2017}.
Thus, we next discuss how to integrate this curvature penalty into EM-based learning.

\begin{table*}[!t]
\centering
\caption{Percentage improvements in test negative log-likelihood ($\Delta NLL$), reduction in overfitting ($\Delta DoF$), and percentage decrease in the loss-surface sharpness ($\Delta Sharp$) for an \textbf{Einsum Network} on the \textbf{synthetic manifold datasets}, comparing models with Hessian-trace regularization against vanilla counterparts. Values are averaged across $5$ independant runs.}
\label{tab:synthetic_sgd_combined}
\footnotesize%
\setlength{\tabcolsep}{2pt}%
\renewcommand{\arraystretch}{0.9}%
\begin{tabular}{@{}l@{}c@{}c@{}c@{}c@{}c@{}c@{}c@{}c@{}c@{}c@{}c@{}c@{}c@{}c@{}c@{}}
\toprule
\textbf{Dataset} & \multicolumn{3}{c}{\textbf{$\mathbf{1}\%$}} & \multicolumn{3}{c}{\textbf{$\mathbf{5}\%$}} & \multicolumn{3}{c}{\textbf{$\mathbf{10}\%$}} & \multicolumn{3}{c}{\textbf{$\mathbf{50}\%$}} & \multicolumn{3}{c}{\textbf{$\mathbf{100}\%$}} \\
\cmidrule(lr){2-4}\cmidrule(lr){5-7}\cmidrule(lr){8-10}\cmidrule(lr){11-13}\cmidrule(lr){14-16}
 & \hspace{1.5pt}$\Delta$NLL\hspace{1.5pt} & \hspace{1.5pt}$\Delta$DoF\hspace{1.5pt} & \hspace{1.5pt}$\Delta$Sharp\hspace{1.5pt} & \hspace{1.5pt}$\Delta$NLL\hspace{1.5pt} & \hspace{1.5pt}$\Delta$DoF\hspace{1.5pt} & \hspace{1.5pt}$\Delta$Sharp\hspace{1.5pt} & \hspace{1.5pt}$\Delta$NLL\hspace{1.5pt} & \hspace{1.5pt}$\Delta$DoF\hspace{1.5pt} & \hspace{1.5pt}$\Delta$Sharp\hspace{1.5pt} & \hspace{1.5pt}$\Delta$NLL\hspace{1.5pt} & \hspace{1.5pt}$\Delta$DoF\hspace{1.5pt} & \hspace{1.5pt}$\Delta$Sharp\hspace{1.5pt} & \hspace{1.5pt}$\Delta$NLL\hspace{1.5pt} & \hspace{1.5pt}$\Delta$DoF\hspace{1.5pt} & \hspace{1.5pt}$\Delta$Sharp\hspace{1.5pt} \\
\midrule
\textit{bent\_lissajous} & $46.65$ & $65.62$ & $93.41$ & $19.45$ & $28.23$ & $52.87$ & $18.16$ & $22.21$ & $56.44$ & $1.27$ & $2.01$ & $34.58$ & $0.65$ & $-0.06$ & $31.25$\\
\textit{helix} & $32.38$ & $62.87$ & $91.80$ & $17.92$ & $35.29$ & $57.51$ & $10.85$ & $5.48$ & $21.48$ & $8.93$ & $-1.90$ & $13.76$ & $7.33$ & $-1.76$ & $15.71$\\
\textit{interlocked\_circles} & $41.19$ & $68.16$ & $88.06$ & $26.54$ & $39.14$ & $75.72$ & $18.65$ & $14.55$ & $46.30$ & $2.20$ & $1.14$ & $34.15$ & $1.02$ & $0.40$ & $41.59$\\
\textit{knotted} & $52.66$ & $59.88$ & $91.01$ & $26.68$ & $42.31$ & $65.46$ & $17.06$ & $9.42$ & $31.12$ & $5.67$ & $1.23$ & $25.44$ & $10.07$ & $-0.01$ & $21.57$\\
\textit{pinwheel} & $57.29$ & $61.85$ & $93.85$ & $22.62$ & $23.27$ & $66.14$ & $13.24$ & $17.58$ & $75.54$ & $3.70$ & $6.14$ & $76.26$ & $0.38$ & $0.77$ & $79.01$\\
\textit{spiral} & $62.11$ & $69.82$ & $92.23$ & $34.03$ & $37.47$ & $67.09$ & $22.62$ & $20.49$ & $61.48$ & $19.83$ & $1.38$ & $12.66$ & $18.35$ & $3.78$ & $23.72$\\
\textit{twisted\_eight} & $49.03$ & $70.20$ & $75.09$ & $13.65$ & $22.31$ & $63.21$ & $11.22$ & $12.54$ & $51.52$ & $2.72$ & $2.42$ & $32.11$ & $-1.22$ & $-0.04$ & $40.85$\\
\textit{two\_moons} & $54.92$ & $67.30$ & $88.47$ & $27.99$ & $30.24$ & $56.63$ & $19.72$ & $17.88$ & $34.10$ & $16.22$ & $-0.13$ & $30.68$ & $3.80$ & $0.85$ & $28.31$\\
\midrule
\textbf{\textit{Mean}} & $49.53$ & $65.71$ & $89.24$ & $23.61$ & $32.28$ & $63.08$ & $16.44$ & $15.02$ & $47.25$ & $7.57$ & $1.54$ & $32.45$ & $5.05$ & $0.49$ & $35.25$\\
\bottomrule
\end{tabular}
\end{table*}
\subsubsection{Sharpness-Aware EM.}
To endow EM with sharpness awareness, we propose to add the 
Hessian-trace 
regularizer into the M-step 
by constraining the sum squared gradients at each sum node. More formally,
the M-step optimization is now carried out under two constraints: (1) the parameters at each sum node must lie on the probability simplex, i.e., $\sum_{c \in in(n)} \theta_{nc} = 1$, and (2) the trace of the Hessian is upper bounded, i.e., $\sum_{c \in in(n)} \left( F_{nc}(x)/\theta_{nc} \right)^2 \leq m$ for some $m$. The resulting parameter update takes the following form:

\begin{proposition}
The EM update for a parameter \(\theta_{nc}\) at a sum node \(n\), under a Hessian trace-based sharpness regularizer, is the solution to the cubic equation:
\begin{equation}
    \lambda\, \theta_{nc}^3 - F_{nc}(x)\, \theta_{nc}^2 - 2\,\mu F_{nc}(x)^2 = 0,
\end{equation}
where \(F_{nc}(x)\) is the expected edge flow, and \(\lambda\), \(\mu\) are Lagrange multipliers for the normalization and trace constraints, respectively.
\end{proposition}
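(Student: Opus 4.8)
The plan is to treat the sharpness-aware M-step as a constrained optimization and apply the method of Lagrange multipliers, mirroring the derivation of the standard closed-form EM update but with the additional trace constraint. First I would fix the E-step quantities: during the M-step the edge flows $F_{nc}(x)$ are the responsibilities computed from the current parameters and are held constant, so the only free variables are the weights $\theta_{nc}$ at the sum node $n$. The objective is the expected complete-data log-likelihood $\sum_{c\in in(n)} F_{nc}(x)\log\theta_{nc}$, now maximized subject to the two stated constraints: the simplex constraint $\sum_{c\in in(n)}\theta_{nc}=1$ and the trace bound $\sum_{c\in in(n)}(F_{nc}(x)/\theta_{nc})^2 \le m$.

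Next I would form the Lagrangian
\[
\mathcal{L} = \sum_{c\in in(n)} F_{nc}(x)\log\theta_{nc} - \lambda\Big(\sum_{c\in in(n)}\theta_{nc}-1\Big) - \mu\Big(\sum_{c\in in(n)}(F_{nc}(x)/\theta_{nc})^2 - m\Big),
\]
with $\lambda$ the multiplier for the normalization equality and $\mu\ge 0$ the KKT multiplier for the inequality trace bound. The observation that keeps the stationarity condition simple is that both the objective and the regularizer are separable across the children $c$, so $\partial\mathcal{L}/\partial\theta_{nc}=0$ isolates a single term: the log term contributes $F_{nc}(x)/\theta_{nc}$, the normalization term contributes $-\lambda$, and the regularizer term $-\mu F_{nc}(x)^2\theta_{nc}^{-2}$ contributes $+2\mu F_{nc}(x)^2\theta_{nc}^{-3}$ (the sign flip coming from the two negatives). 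Setting the sum to zero and clearing denominators by multiplying through by $\theta_{nc}^{3}$ then yields exactly
\[
\lambda\theta_{nc}^3 - F_{nc}(x)\theta_{nc}^2 - 2\mu F_{nc}(x)^2 = 0,
\]
the claimed cubic.

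Finally I would record the KKT side conditions: $\lambda$ is pinned down by substituting the stationary $\theta_{nc}$ back into $\sum_{c}\theta_{nc}=1$, and $\mu$ is governed by complementary slackness, so either the trace constraint is inactive ($\mu=0$), recovering the classical update $\theta_{nc}\propto F_{nc}$, or it is active and $\mu>0$ is chosen to make the trace bound hold with equality. The main obstacle I anticipate is not the differentiation, which is routine once $F_{nc}(x)$ is treated as a constant, but the bookkeeping of signs together with the fact that $\lambda$ and $\mu$ are shared across all children of $n$: each weight obeys the same cubic, but the two multipliers must be solved jointly against the two constraints, coupling the per-parameter equations. This coupling is precisely the difficulty that the subsequent reformulation---trading the Hessian-trace constraint for an equivalent gradient-norm penalty---is designed to sidestep, collapsing the cubic to a quadratic admitting a closed-form solution.
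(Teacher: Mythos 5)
Your proposal is correct and follows what is essentially the only natural route, which is also the paper's: form the Lagrangian of the M-step objective $\sum_{c} F_{nc}(x)\log\theta_{nc}$ with multipliers $\lambda$ and $\mu$ for the simplex and trace constraints, use separability across children, and clear denominators in the stationarity condition (up to an overall sign) to obtain the stated cubic. Your KKT side remarks (recovering $\theta_{nc}\propto F_{nc}(x)$ when $\mu=0$, and the coupling of the shared multipliers motivating the quadratic gradient-norm reformulation) are consistent with the paper's surrounding discussion.
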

\begin{proof}
   Deferred to the appendix.
\end{proof}

Incorporating the Hessian trace into the EM update thus yields, at each sum node, a cubic equation in the parameters that must be solved exactly. However, solving
such cubic equations 
can be computationally expensive and may yield multiple real roots, negative values, or even complex solutions, making the update process unstable or infeasible. To overcome this, we exploit a key property of PC gradients:
the partial derivative of the log-likelihood with respect to a sum node parameter $\theta_{nc}$ takes the form $\frac{F_{nc}(x)}{\theta_{nc}}$, which is always non-negative as both 
$F_{nc}(x)$ and $\theta_{nc}$
are positive. Consequently, the squared gradient is a monotonic function of the gradient itself, thus minimizing the gradient suffices to reduce its square. This allows us to directly penalize the gradient as a surrogate for reducing the trace, resulting in a simpler \emph{quadratic} update with a closed‐form solution.
\begin{theorem}
The EM parameter update at sum node \(n\), under the gradient regularized objective is given by:
\[
\theta_{nc} = \frac{F_{nc}(x) + \sqrt{F_{nc}(x)^2 + 4 \lambda \mu F_{nc}(x)}}{2 \lambda},
\]
where \(F_{nc}(x)\) denotes the flow along the edge from sum node \(n\) to its child \(c\), and \(\lambda, \mu \ge 0\) are the Lagrange multipliers corresponding to the normalization and regularization constraints, respectively.
\end{theorem}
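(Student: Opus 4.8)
The plan is to treat the regularized M-step as a constrained concave maximization and read the update off its first-order (KKT) stationarity conditions. First I would write the M-step objective at a single sum node $n$ as the flow-weighted log-likelihood $\sum_{c\in in(n)} F_{nc}(x)\log\theta_{nc}$, exactly as in the standard EM M-step, and attach two constraints: the simplex constraint $\sum_{c\in in(n)}\theta_{nc}=1$ and the curvature budget. Rather than carrying the squared-gradient constraint $\sum_c (F_{nc}(x)/\theta_{nc})^2 \le m$ directly---which, as the preceding discussion shows, produces a cubic---I would invoke the surrogate already justified in the text: since $F_{nc}(x)/\theta_{nc}\ge 0$, the map $t\mapsto t^2$ is monotone on $[0,\infty)$, so bounding the summed gradients $\sum_c F_{nc}(x)/\theta_{nc}$ suffices to control the summed squared gradients. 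This replaces the problematic quadratic-in-gradient budget by a penalty that is linear in the per-edge gradient.

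Next I would form the Lagrangian
\[
\mathcal{L} = \sum_{c} F_{nc}(x)\log\theta_{nc} - \lambda\Big(\sum_c\theta_{nc}-1\Big) - \mu\sum_c \frac{F_{nc}(x)}{\theta_{nc}},
\]
with $\lambda$ the multiplier for normalization and $\mu\ge 0$ the KKT multiplier for the (inequality) curvature constraint. Differentiating with respect to a single $\theta_{nc}$ and setting the result to zero yields the stationarity condition
\[
\frac{F_{nc}(x)}{\theta_{nc}} - \lambda + \mu\frac{F_{nc}(x)}{\theta_{nc}^2} = 0 .
\]
Clearing denominators by multiplying through by $\theta_{nc}^2$ turns this into the quadratic $\lambda\theta_{nc}^2 - F_{nc}(x)\theta_{nc} - \mu F_{nc}(x) = 0$, to be contrasted with the cubic produced by the true squared-gradient constraint.

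Finally I would solve this quadratic and select the admissible root. The quadratic formula gives $\theta_{nc} = \big(F_{nc}(x)\pm\sqrt{F_{nc}(x)^2 + 4\lambda\mu F_{nc}(x)}\big)/(2\lambda)$; since $\lambda,\mu,F_{nc}(x)\ge 0$ the square-root term dominates $F_{nc}(x)$, so only the $+$ root keeps $\theta_{nc}>0$ as demanded by the PC parameterization $\theta_{nc}\in(0,1]$, reproducing the claimed closed form. To confirm this stationary point is the intended maximizer rather than a saddle, I would note that $\sum_c F_{nc}\log\theta_{nc}$ is concave, the normalization is linear, and $-\mu F_{nc}/\theta_{nc}$ is concave on $\theta_{nc}>0$ (as $1/\theta$ is convex), so $\mathcal{L}$ is concave and the KKT point is a global maximizer on the feasible set.

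The main obstacle I anticipate is not the algebra but the justification of the surrogate step---arguing that penalizing the summed gradients is an acceptable stand-in for penalizing the summed squared gradients (the trace), and tying the multiplier $\mu$ back to a valid budget $m$ via complementary slackness. A secondary subtlety is confirming the $+$ root is the unique feasible solution and that the resulting per-edge weights can be renormalized onto the simplex consistently with the value of $\lambda$, since the curvature term breaks the clean self-normalizing structure of the vanilla M-step.
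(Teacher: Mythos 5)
Your proposal is correct and follows essentially the same route the paper takes: replace the squared-gradient (trace) budget by a gradient budget using the monotonicity of $t\mapsto t^2$ on $[0,\infty)$, form the Lagrangian with the flow-weighted log-likelihood and the simplex constraint, and solve the resulting quadratic $\lambda\theta_{nc}^2 - F_{nc}(x)\theta_{nc} - \mu F_{nc}(x)=0$, keeping the positive root. Your stationarity condition, the contrast with the cubic from the unmodified trace constraint, and the concavity/positivity checks all match the paper's derivation.
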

\begin{proof}
    Deferred to the appendix.
\end{proof}

\section{Experiments and Results}
We organize our empirical evaluation to answer the following four research questions.
\begin{itemize}[wide, labelindent=0pt]
    \item[({\bf Q1})] Do large, expressive PCs overfit on limited data, and does sharpness, defined via Hessian-trace capture this?
    \item[({\bf Q2})] Is our derived sum-squared-gradient expression for the Hessian-trace  correct and efficient to compute?
    \item[({\bf Q3})] Does the proposed sharpness aware-learning framework reduce overfitting and improve generalization?
    \item[({\bf Q4})] What effect does the regularization strength $\mu$ have?
\end{itemize}

\begin{figure}[t]
    \centering
    \begin{subfigure}{0.51\linewidth}
    \includegraphics[width=\linewidth]{ 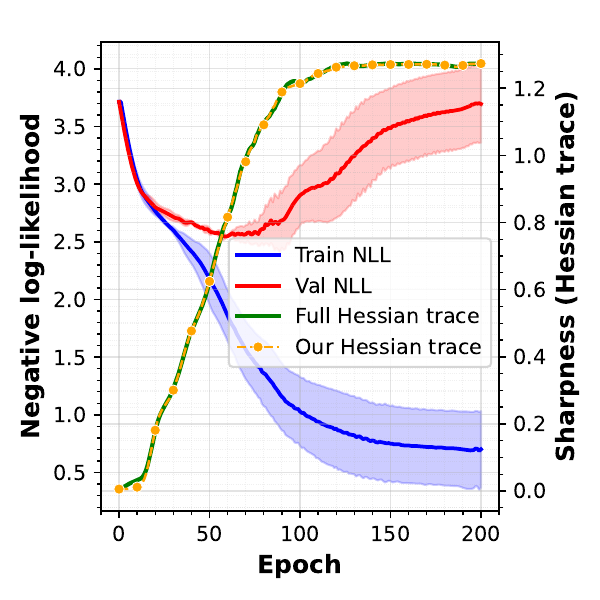}    
    \end{subfigure}
    \begin{subfigure}{0.48\linewidth}
    \includegraphics[width=\linewidth]{ 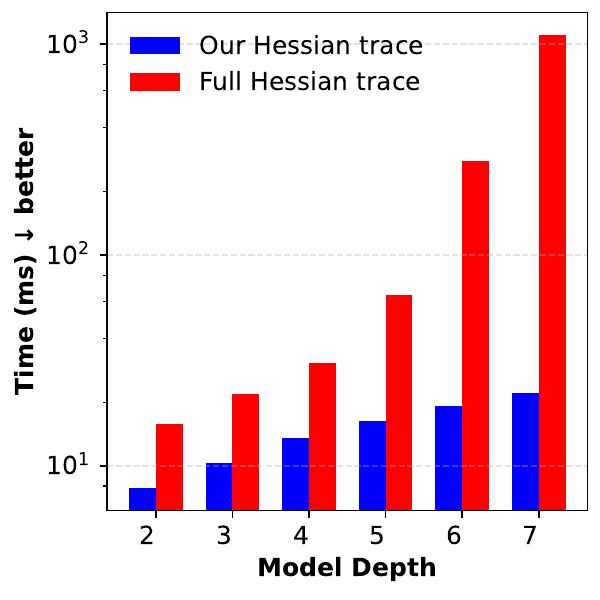}    
    \end{subfigure}
    \caption{[Left] Training and validation negative log-likelihood of EinsumNet on a $2D$ spiral dataset across epochs, with Hessian-trace computed by our sum-squared-gradients formula and torch autograd. [Right]  Time taken for computing the Hessian trace as the network depth grows.}
    \label{fig:Hessian_trace_valid}
\end{figure}

\begin{table*}[!t]
\centering
\caption{Percentage improvements in test negative log-likelihood ($\Delta NLL$), reduction in overfitting ($\Delta DoF$), and percentage decrease in the loss-surface sharpness ($\Delta Sharp$) for a \textbf{PyJuice HCLT} model on the \textbf{binary density estimation datasets}, comparing models with Hessian-trace regularization against vanilla counterparts. Values represent the mean over $5$ runs.}
\label{tab:binary_em_combined}
\footnotesize%
\setlength{\tabcolsep}{2pt}%
\renewcommand{\arraystretch}{0.9}%
\begin{tabular}{@{}l@{}c@{}c@{}c@{}c@{}c@{}c@{}c@{}c@{}c@{}c@{}c@{}c@{}c@{}c@{}c@{}}
\toprule
\textbf{Dataset} & \multicolumn{3}{c}{\textbf{$\mathbf{1}\%$}} & \multicolumn{3}{c}{\textbf{$\mathbf{5}\%$}} & \multicolumn{3}{c}{\textbf{$\mathbf{10}\%$}} & \multicolumn{3}{c}{\textbf{$\mathbf{50}\%$}} & \multicolumn{3}{c}{\textbf{$\mathbf{100}\%$}} \\
\cmidrule(lr){2-4}\cmidrule(lr){5-7}\cmidrule(lr){8-10}\cmidrule(lr){11-13}\cmidrule(lr){14-16}
 & \hspace{1.5pt}$\Delta$NLL\hspace{1.5pt} & \hspace{1.5pt}$\Delta$DoF\hspace{1.5pt} & \hspace{1.5pt}$\Delta$Sharp\hspace{1.5pt} & \hspace{1.5pt}$\Delta$NLL\hspace{1.5pt} & \hspace{1.5pt}$\Delta$DoF\hspace{1.5pt} & \hspace{1.5pt}$\Delta$Sharp\hspace{1.5pt} & \hspace{1.5pt}$\Delta$NLL\hspace{1.5pt} & \hspace{1.5pt}$\Delta$DoF\hspace{1.5pt} & \hspace{1.5pt}$\Delta$Sharp\hspace{1.5pt} & \hspace{1.5pt}$\Delta$NLL\hspace{1.5pt} & \hspace{1.5pt}$\Delta$DoF\hspace{1.5pt} & \hspace{1.5pt}$\Delta$Sharp\hspace{1.5pt} & \hspace{1.5pt}$\Delta$NLL\hspace{1.5pt} & \hspace{1.5pt}$\Delta$DoF\hspace{1.5pt} & \hspace{1.5pt}$\Delta$Sharp\hspace{1.5pt} \\
\midrule
\textit{accidents} & $1.40$ & $6.94$ & $15.61$ & $-1.88$ & $1.54$ & $16.54$ & $-1.67$ & $0.67$ & $12.33$ & $-0.50$ & $0.04$ & $12.03$ & $-0.29$ & $0.01$ & $3.39$\\
\textit{ad} & $2.11$ & $1.24$ & $56.79$ & $3.20$ & $3.54$ & $78.44$ & $1.26$ & $2.65$ & $27.21$ & $-2.26$ & $1.42$ & $10.80$ & $-1.85$ & $0.43$ & $9.31$\\
\textit{baudio} & $9.39$ & $24.87$ & $9.33$ & $-0.58$ & $1.30$ & $3.06$ & $-0.56$ & $0.35$ & $5.17$ & $-0.20$ & $0.00$ & $1.19$ & $-0.11$ & $0.00$ & $0.88$\\
\textit{bbc} & $10.45$ & $3.52$ & $32.12$ & $15.75$ & $19.70$ & $25.16$ & $8.75$ & $19.57$ & $28.21$ & $-0.23$ & $1.32$ & $11.47$ & $-0.37$ & $0.25$ & $10.74$\\
\textit{bnetflix} & $3.07$ & $10.71$ & $-1.65$ & $-0.38$ & $0.29$ & $2.94$ & $-0.31$ & $0.12$ & $2.68$ & $-0.02$ & $0.00$ & $0.30$ & $-0.01$ & $0.00$ & $0.14$\\
\textit{book} & $9.56$ & $6.11$ & $74.56$ & $1.47$ & $4.15$ & $29.80$ & $-0.60$ & $1.09$ & $21.63$ & $-0.83$ & $0.03$ & $10.23$ & $-0.48$ & $0.02$ & $7.63$\\
\textit{c20ng} & $11.56$ & $13.28$ & $31.06$ & $0.78$ & $3.03$ & $25.84$ & $0.06$ & $0.84$ & $10.53$ & $-0.28$ & $0.01$ & $6.30$ & $-0.12$ & $0.01$ & $3.54$\\
\textit{cr52} & $10.54$ & $9.76$ & $35.21$ & $3.68$ & $12.54$ & $-2.54$ & $-0.10$ & $4.04$ & $3.47$ & $-0.57$ & $0.24$ & $3.45$ & $-0.31$ & $0.04$ & $2.65$\\
\textit{cwebkb} & $9.05$ & $4.05$ & $30.95$ & $14.17$ & $26.46$ & $30.52$ & $3.83$ & $12.36$ & $28.79$ & $-0.63$ & $0.39$ & $11.21$ & $-0.46$ & $0.11$ & $5.29$\\
\textit{dna} & $29.45$ & $15.54$ & $3.87$ & $2.59$ & $6.88$ & $13.93$ & $0.10$ & $2.30$ & $8.95$ & $-0.89$ & $0.25$ & $8.81$ & $-0.47$ & $0.07$ & $3.91$\\
\textit{jester} & $19.22$ & $28.28$ & $25.52$ & $0.24$ & $3.81$ & $-2.60$ & $-0.71$ & $0.36$ & $-3.30$ & $-0.24$ & $0.04$ & $0.88$ & $-0.12$ & $0.00$ & $-0.89$\\
\textit{kdd} & $0.04$ & $1.51$ & $11.96$ & $0.11$ & $0.16$ & $3.50$ & $0.10$ & $0.08$ & $7.49$ & $-0.02$ & $0.00$ & $3.65$ & $0.00$ & $0.00$ & $2.20$\\
\textit{kosarek} & $1.93$ & $6.84$ & $23.70$ & $-0.76$ & $0.60$ & $13.33$ & $-0.42$ & $0.27$ & $7.90$ & $-0.20$ & $0.03$ & $7.51$ & $-0.07$ & $0.00$ & $5.66$\\
\textit{msnbc} & $-0.10$ & $0.03$ & $0.35$ & $0.02$ & $-0.01$ & $0.04$ & $-0.02$ & $0.00$ & $0.07$ & $-0.01$ & $0.00$ & $0.06$ & $0.00$ & $0.00$ & $0.06$\\
\textit{msweb} & $1.11$ & $3.75$ & $33.03$ & $-0.23$ & $0.79$ & $18.57$ & $-0.29$ & $0.45$ & $7.33$ & $-0.08$ & $0.04$ & $10.80$ & $-0.03$ & $0.03$ & $6.54$\\
\textit{nltcs} & $0.18$ & $2.31$ & $7.20$ & $-0.94$ & $0.35$ & $4.80$ & $-0.68$ & $0.06$ & $2.52$ & $-0.08$ & $0.00$ & $-1.04$ & $-0.07$ & $0.00$ & $-0.58$\\
\textit{plants} & $1.59$ & $7.19$ & $13.72$ & $-2.41$ & $0.95$ & $10.34$ & $-1.63$ & $0.38$ & $8.02$ & $-0.56$ & $0.02$ & $3.56$ & $-0.28$ & $0.01$ & $3.50$\\
\textit{pumsb\_star} & $2.04$ & $4.83$ & $8.62$ & $-2.76$ & $1.09$ & $16.12$ & $-1.69$ & $0.51$ & $8.68$ & $-0.50$ & $0.05$ & $5.50$ & $-0.29$ & $0.02$ & $4.33$\\
\textit{tmovie} & $15.15$ & $15.90$ & $16.97$ & $12.01$ & $28.27$ & $-3.17$ & $0.02$ & $8.32$ & $9.83$ & $-1.42$ & $0.34$ & $3.55$ & $-0.82$ & $0.12$ & $11.08$\\
\textit{tretail} & $4.55$ & $9.60$ & $31.95$ & $-0.25$ & $0.44$ & $21.81$ & $-0.10$ & $0.12$ & $10.93$ & $-0.03$ & $0.00$ & $0.69$ & $-0.02$ & $0.00$ & $1.44$\\
\midrule
\textbf{\textit{Mean}} & $7.12$ & $8.81$ & $23.04$ & $2.19$ & $5.79$ & $15.32$ & $0.27$ & $2.73$ & $10.42$ & $-0.48$ & $0.21$ & $5.55$ & $-0.31$ & $0.06$ & $4.04$\\
\bottomrule
\end{tabular}
\end{table*}

\paragraph{Setup.} To answer these, we conducted experiments on $8$ synthetic $2$D/$3$D manifold datasets \cite{sidheekh2022vq,SidheekhPMLR2023}
as well as the 20 standard binary density estimation benchmark \cite{van2012markov, bekker2015tractable}.
To show that our approach applies broadly across different PC model classes and implementations,
we integrated it into two widely used PC frameworks—Einsum Networks \cite{PeharzICML2020} and PyJuice \cite{LiuICML2024}, evaluating it on different structural settings.
For synthetic datasets, we use a fixed RAT-SPN \cite{PeharzUAI2019} architecture with $10$ input-distributions, sum-nodes and num-repetitions. For the binary datasets, we adopt the Hidden Chow-Liu Tree (HCLT) structure from PyJuice, with a latent size of $100$ to increase model capacity. To simulate limited data settings where overfitting can happen, we train each model on random subsets of each dataset at fractions \{$1\%$, $5\%$, $10\%$, $50\%$ and $100\%$ \}. 
To show applicability across learning methods, we integrated our regularizer into two settings: (1) gradient-based learning for Einsum Networks (using Adam) and (2) EM-based learning for PyJuice HCLTs (using our quadratic closed-form updates).   
We defer further experimental results and details to the supplementary material.

\paragraph{(Q1) Overfitting and Sharpness:} To show that deep PCs indeed overfit when data is scarce, we plot the train and validation negative log-likelihoods (NLL) for an EinsumNet trained on a $5\%$ data for $2D$ spiral distribution in Figure \ref{fig:Hessian_trace_valid}[left]. We see that the train-NLL continues to decrease, while the val-NLL rises, indicating overfitting and a widening generalization gap. Crucially, the value of sharpness, computed via Hessian-trace also grows in tandem with this gap, peaking when overfitting occurs. This confirms that  sharp minima that correlate with overfitting do exist in PCs, and that the Hessian-trace is capable of detecting them.

\paragraph{(Q2) Correctness \& Efficiency of Hessian-Trace Computation.} We empirically validated the correctness of our Hessian trace derivation using Einsum Networks, which support full Hessian evaluation via PyTorch’s automatic differentiation. Figure~\ref{fig:Hessian_trace_valid}[left] shows an exact match between the Hessian trace computed using our proposed sum-of-squared-gradients (SSG) formula and the one obtained directly via autograd on the $2$D spiral dataset, where full Hessian computation was feasible without exceeding memory limits. This confirms the correctness of our derivation. Figure \ref{fig:Hessian_trace_valid}[right] also compares the computation time of the Hessian trace using autograd and our closed-form SSG formula. We see that as the model depth increases, autograd's runtime suffers from an exponential blow up, while our SSG formula scales only linearly, and is thus a more practical and accurate way to analyze the curvature, even for deep PCs.

\paragraph{(Q3) Gains from Sharpness-Aware Learning.}
To study the effect our sharpness regularizer has in learning better PCs, 
we measured the performance improvement achieved by our regularized model as compared to its vanilla counterpart using three metrics-
\begin{enumerate}[label={(\arabic*)}]
    \item The relative reduction in Test-NLL [$\Delta NLL$(\%)]
    \item The reduction in the degree of overfitting [$\Delta DoF$(\%)], where $DoF=\frac{NLL_{test}-NLL_{train}}{|NLL_{train}|}$ 
    \item The relative reduction in sharpness [$\Delta Sharp$(\%)], as measured by our Hessian-trace formula, at convergence. 
\end{enumerate}

Table \ref{tab:synthetic_sgd_combined}  reports the mean results over five runs for an EinsumNet trained on the synthetic $2$D/$3$D manifold datasets at varying training-set fractions. We observe that in the lowest data setting, on average, our regularizer cuts overfitting by up to $65$ \%, flattens the loss surface by $89$\%, and boosts test log-likelihood by upto $49$\%. Although the absolute gains diminish with more data, they remain positive across all fractions on average, demonstrating that trace minimization consistently guides the learning toward better-generalizing optima. Table \ref{tab:binary_em_combined} presents analogous results for PyJuice HCLTs on real-world binary datasets using our regularized EM. Again, in the lowest-data regime we observe a $7$ \% improvement in test NLL, an $8$\% reduction in overfitting, and a $23$\% decrease in sharpness, on average. As dataset size grows, these gains plateau—and at the highest data fractions we record a marginal ($<0.5 \%$) drop in test NLL. This is expected when overfitting is negligible as the regularizer may push the parameters away from an otherwise sufficient optimum. 
But even in these settings, our method continues to reduce overfitting and sharpness, confirming its effectiveness at steering PCs toward flatter, more robust solutions.  
\begin{figure}[t]
    \centering
    \includegraphics[width=\linewidth,trim=0em 0em 0em 2.5em,clip]{ 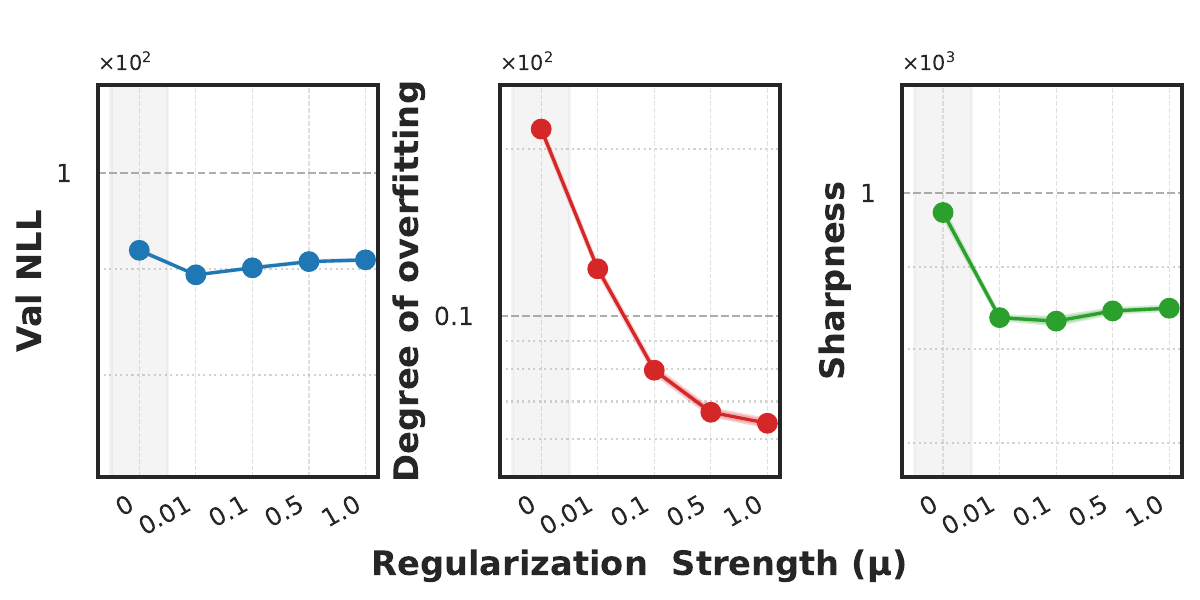}
    \caption{Ablation showing the effect of $\mu$ on the validation negative log-likelihood, degree of overfitting and sharpness.}
    \label{fig:ablation-dna-em-pyjuice-0.05}
\end{figure}
\paragraph{(Q4) Ablation on $\mu$.} To study the sensitivity of our framework to the regularization strength, we trained a PyJuice HCLT using the $5\%$ split of DNA binary dataset, for varying values of $\mu$ and tracked changes in Validation-NLL, degree of overfitting and sharpness (in Figure \ref{fig:ablation-dna-em-pyjuice-0.05}). We observe that even a small $\mu \in (0,0.1]$ is sufficient to capture most of the gains-reducing overfitting and curvature. Larger $\mu$ values yield only marginal gains at the cost of slight underfitting. Thus, our framework is robust to the choice of $\mu$ in a broad mid-range and we select it based on validation performance. 

\section{Conclusion}
In this work, we introduced a new direction to study 
the training of PCs through the lens of the log-likelihood surface geometry. We derived a closed-form expression for the exact full Hessian of the log-likelihood in tree-structured PCs and demonstrated its tractability. For general DAG-structured PCs, we showed that while the full Hessian can be intractable, its trace remains exactly
and efficiently computable--
offering the first scalable curvature measure for training large PCs. 
Building on this, we designed a novel regularizer whose equivalent 
gradient-norm formulation yields closed-form quadratic updates, enabling efficient optimization. Our experiments confirmed that our approach steers training toward flatter minima and reduces overfitting, especially in low-data regimes. Overall, our work opens up a promising new direction for studying PCs. We forsee future work investigating the log-likelihood landscape to identify the presence of asymmetric valleys analogous to those observed in DNNs, developing a theoretical framework for understanding convergence in over-parameterized PCs, and designing alternative optimization strategies that leverage tractable second-order geometric information.

\section*{Acknowledgements}
SN and SS gratefully acknowledge the generous support by the AFOSR award FA9550-23-1-0239, the ARO award W911NF2010224 and the DARPA Assured Neuro Symbolic Learning and Reasoning (ANSR) award HR001122S0039. CK and HS gratefully acknowledge Anagha Sabu for the discussions related to the work and CK, HS, and VS thank IIT Palakkad for the access to Madhava Cluster.

\bibliography{standardized-references}

\end{document}